\newenvironment{proof}{{\textbf{\textit{Proof:}}}}{}
\newcommand{\tabincell}[2]{\begin{tabular}{@{}#1@{}}#2\end{tabular}}
\begin{document}

\title{Homogeneous Online Transfer Learning with Online Distribution Discrepancy Minimization}

\author{
Yuntao Du \institute{Nanjing University,China, duyuntao@smail.nju.edu.cn} \and
Zhiwen Tan \institute{Nanjing University,China,yaoyueduzhen@outlook.com } \and
Qian Chen \institute{Nanjing University,China, mf1833007@smail.nju.edu.cn} \and \\
Yi Zhang \institute{Nanjing University,China, njuzhangyi@smail.nju.edu.cn} \and
Chongjun Wang \institute{Nanjing University,China, email: chjwang@nju.edu.cn}
}

\maketitle
\bibliographystyle{ecai}

\begin{abstract}
  Transfer learning has been demonstrated to be successful and essential in diverse applications, which transfers knowledge from related but different source domains to the target domain.
   Online transfer learning(OTL) is a more challenging problem where the target data  arrive in an online manner. Most OTL methods combine source classifier and target classifier directly by assigning a weight to each classifier, and adjust the  weights constantly.  However, these methods pay little attention to reducing the distribution discrepancy between domains. In this paper, we propose a novel online transfer learning method which seeks to find a new feature representation, so that the marginal distribution and conditional distribution discrepancy can be online reduced  simultaneously. We focus on online transfer learning  with multiple source domains and use the Hedge strategy to leverage knowledge from source domains.
   We analyze the theoretical properties of the proposed algorithm and provide an upper mistake bound.
   Comprehensive experiments on two real-world datasets show that our method outperforms state-of-the-art methods by a large margin.
  
  \end{abstract}
  
  \section{Introduction}
  Transfer learning\cite{b1} is able to transfer knowledge from labeled source domains to the target domain. In transfer learning problems, the source domain and the target domain have different distributions. The distribution discrepancy between source domain and target domain includes marginal distribution discrepancy and conditional distribution discrepancy\cite{b3,b4,b2}. In view of this challenge, many transfer learning methods are designed to reduce the distribution discrepancy between domains so that the  transfer performance can be improved \cite{b3,b4,b2}. Previous work explored various techniques for statistics matching, e.g, maximum mean discrepancy(MMD)\cite{b3,b4,b2}, Correlation Alignment (CORAL)\cite{b4_add} and have achieved remarkable effect.
  
  Most existing transfer learning methods focus on batch learning manners, where all the source and target data are given in advance. However, this assumption may not hold in real-world applications,  it is expensive to collect sufficient data at one time. Moreover, in some situations, the instances arrive in a sequential manner. Recently, Online transfer learning(OTL)\cite{b5} has attracted a lot of attention in the community of machine learning.  In this scenario, we can collect many labeled source data in advance, but we receive the target data in an online manner, i.e., we receive a target instance at each round. OTL aims at preforming an online task in the target domain by leveraging knowledge from some offline source domains. In online learning\cite{b14}, we need to train a classification model that can predict for the new arrival instance in the target domain, then we will receive its true label,  and we sequentially update the existing model based on the loss information between the true label and the predicted label.
  
  Existing online transfer learning methods focus on how to perform online learning in the target domain by leveraging knowledge from source domains \cite{b5,b6,b7}. They mostly adopt the ensemble learning based strategies,  i.e., combining the source classifier and the target classifier directly. \cite{b5} proposed to train a classifier with the help of the source domain, and adapted the classifier to new coming data by constantly adjusting the combination weights. On the basis of \cite{b5}, a lot of works focus on multi-source online transfer learning by using different technologies of multi-source combination, for example, binary graph\cite{b6}, selecting source domains adaptively\cite{b7}. Though achieving exciting performance, many previous works have proven that when the distribution discrepancy between domains is large, directly combining source classifier and target classifier can not perform well and even lead to negative transfer\cite{b1}. In other words, it is also crucial for online transfer learning to reduce distribution discrepancy, which is ignored by existing methods.

  \begin{table*}[ht]
  %{\upshape
   \caption{ Notations and descriptions in this paper}
    \centering
    %  \begin{tabular*}{cp{8cm}cp{8cm}cp{8cm}cp{8cm}}
      \begin{tabular}{c|c|c|c}
  
      \toprule
      Notation & Description  & Notation & Description  \\
      \midrule[1pt]
      \midrule[1pt]
      $\mathcal{D}_{S_i}$                   & $i$-th source domain data                                   & $\mathcal{D}_T^l ,   \mathcal{D}_T^u $     &  labeled target domain data, unlabeled target domain data     \\
      $n_{S_{i}}$                           & \# $i$-th source domain data                                & $n_T^u$, $n_T^l$                           &   \# unlabeled target domain data / labeled target domain data       \\
      $n, K$                                & \# source domains / classes                                 & $\bar{c}_{S_i}$                            & the mean of the $i$-th source domain data               \\
      $A_i^t$                               & transformation matrix of the $i$-th domain at round \textit{t}                & $\bar{c}_{T}$                              & the mean of the target domain data                    \\
      $f_{S_i}$                             & the classifier of the $i$-th source domain                  & $\bar{c}_{S_i}^{k}$                        & the mean of the $k$-th class of the $i$-th source domain data \\
      $f_{T_i}$                             & the $i$-th classifier of the target domain                  & $\bar{c}_T^{k}$                            & the mean of the $k$-th class of the target domain  data        \\
      $w_{i,t}^k$                           & the $k$-th weight vector of the $i$-th classifier at round $t$            & $X_{s_i}^{p}$                              & projected data of the $i$-th source domain data            \\
      $F_t^k$                               &    the $k$-th component of the final classifier at round \textit{t}  & $x_{t}^{p_i}$                              & projected data of the $i$-th feature space for $x_t$      \\
      $u_i$                                 & the weight of the $i$-th source classifier                  & $T_w$                                      &time window            \\
      $v_i$                                 & the weight of the $i$-th target classifier                  & $\mu$                                      & regularization parameter          \\
      \bottomrule
      \end{tabular}%
  \label{t1}
  \end{table*}%

  In this paper, we investigate online transfer learning with multiple domains under homogeneous space, where there are some source domains, and the feature space of source domains and target domain is the same. We use ensemble learning based method i.e., we train a classifier in each domain then assign a weight to each classifier and take advantage of the Hedge strategy\cite{b30} to dynamically adjust the weight of each domain. For each source domain and the target domain, to reduce domain discrepancy in an online manner, we propose a novel online method, which jointly minimizes marginal distribution discrepancy and conditional distribution discrepancy  based on MMD in a linear feature transformation procedure. A linear transformation matrix is used to project the original source and target data onto a new space for each source domain and the target domain. This learning process is named as \textbf{\emph {online stage}}.

  In the above method, we can just use a random matrix as an initialization matrix, but it will drop the transfer performance because in the early peroid, the learning process is performed in an unsuited space. We design an \textbf{\emph{offline stage}} to meet this challenge. Unlabeled data are usually easy to collect in the target domain, with these unlabeled data and source data, we can use some classical offline MMD-based method, e.g, TCA\cite{b3}, JDA\cite{b4}, BDA\cite{b2} to get a good initialization matrix for each source domain and the target domain. After getting the transformation matrixes, we firstly project the original data to get the new feature representation. Then, in each source domain, a classifier is trained for transferring source knowledge.

  In general, we propose an algorithm called \textit{Homogeneous Online Transfer Learning with Online Distribution Discrepancy Minimization} (HomOTL-ODDM), which is consisted of the \textbf{\emph{offline stage}} and the \textbf{\emph{online stage}}. At the offline stage, an initialization transformation matrix and a classifier is trained in each source domain. At the online stage, for each source and target domain, an online classifier is trained in an online manner in new feature space. The transformation matrixes and the weights of classifiers are also updated. We theoretically analyze the proposed algorithm  and provide an upper mistake bound. Extensive experiments demonstrate that HomOTL-ODDM outperforms state-of-the-art methods by a large margin.

  To sum up, our contributions are mainly three-fold:

  1) We propose a novel method for multiple source online transfer learning by simultaneously reducing the marginal distribution  and the conditional distribution discrepancy between domains in a linear feature transformation procedure.
  
  2) We analyze the theoretical properties of the proposed algorithm and provide an upper mistake bound.
  
  3) We conduct extensive experiments on two real-world data sets  and the results validate the effectiveness of the proposed method.

  \section{Related work}
  
  \textbf{Transfer Learning:} The proposed method is mainly related to the feature-based transfer learning methods. \cite{b3,b17,b19} try to correct for the marginal distribution discrepancy between domains. \cite{b3} learns some transfer components across domains in a Reproducing Kernel Hilbert Space (RKHS) using \textit{Maximum Mean Discrepancy (MMD)}. A new parametric kernel is used to extract feature, which can dramatically minimize the distance between domain marginal distribution by projecting data onto the learned transfer components.  \cite{b23} tries to correct for the conditional distribution discrepancy between domains. Based on measures of divergence between the two domains, it penalizes features with large divergence, while improving the effectiveness of other less deviant correlated features. \cite{b4} first tries to correct for both the marginal and conditional distribution discrepancy simultaneously. Instead of treating these two discrepancy equally, \cite{b2} adaptively leverages the importance of the marginal and conditional distribution discrepancy. However, these methods can not be used in online manner directly because they are designed for offline manner. Our method is capable of reducing marginal and conditional distribution discrepancy simultaneously in an online manner.
  
  \textbf{Online Learning:}One of the most famous approaches is the Perceptron algorithm \cite{b25}, which updates the model by adding a new instance with some constant weight into the current set of support vectors when the instance is misclassified. Recently, many algorithms have been proposed based on maximum margin \cite{b9,b26,b27}. One example is the Passive-Aggressive (PA) algorithm\cite{b9}, which updates the model when a new instance is misclassified or its classification confidence is smaller than some predefined threshold. Specially, it can not only be used for binary classification problems but also for multi-class problems. More extensive surveys for online learning can be found in \cite{b14}.
  
  \textbf{Online Transfer Learning:}
  Unlike conventional transfer learning, online transfer learning assumes that labeled source   data are available in advance but the target data arrive sequentially. \cite{b5} proposes an online transfer learning algorithm for single source domain under homogeneous and heterogeneous space, which forms the final classifier by simply ensembling the source and target classifiers with dynamical weights. On the basis of \cite{b5},  \cite{b6} proposes an online transfer learning algorithm based on multiple homogeneous source domain, it assumes that the online target data is unlabeled.  On the premise that the target domain is labeled data, two online transfer learning algorithms based on multiple homogeneous or heterogeneous source domains are proposed in \cite{b7}, which also constitute the final classifier in an ensemble strategy. \cite{b33} researches on the application of online transfer learning in forecasting advertising volume. Existing methods have focused on how to transfer knowledge from source domains, however, they pay little attention to how to reduce the distribution discrepancy between domains. Instead, our method can reduce both together so that the transfer performance is improved.

  \begin{figure*}
  \begin{center}
  \includegraphics[width = 0.65\linewidth]{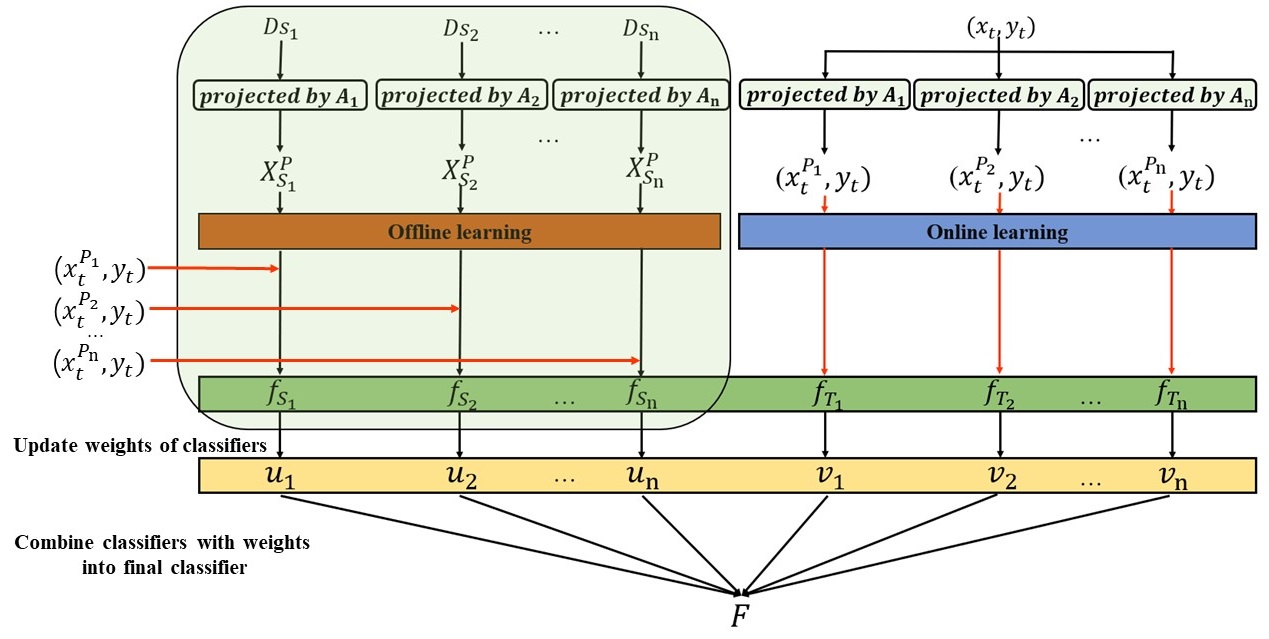}
  \end{center}
  \caption{ Structure of HomOTL-ODDM algorithm. It is composed of an the offline stage and an online stage. At the offline stage, $n$ source classifiers and initial transformation matrixes are trained in each source domain for knowledge transfer. At  the online stage, the target data is projected onto $n$ different feature space, and $n$ classifiers are trained in these new space. Each classifier is assigned a weight and the Hedge strategy is used to update the weights dynamically. Besides, we design an online method to update the transform matrixes, which can reduce the marginal and conditional distribution discrepancy between domains simultaneously.}
  \label{overall}
  \end{figure*}
  
  \section{problem definition}

  We begin with the definitions of terminologies, then frequently used notations are summarized in Table \ref{t1}.

  In multi-source OTL, $n$ source domains $D_S=\{D_{S_1},D_{S_2},...,D_{S_n}\}$ and a target domain $T$ are given. Each source domain $D_{S_i}=\{(x_j,y_j),j=1,2,...,n_{S_i}\}$ contains $n_{S_i}$ instances drawn from distribution $P_{S_i}(x)$, where $x_j \in \mathcal X_{S_i}$ with associated labels $y_j \in \mathcal Y_{S_i}$.
  Similarly, the target domain consists of $n_T^u$ offline unlabeled instances drawn from distribution $P_T(x)$, $D^u_T = \{(x_i,y_i),i=1,2,...,n_T^u\}$, which are available in advance as well as $n_T^l$ online labeled instances $D^l_T = \{(x_j,y_j), j=1,2,...,n_T^l\}$, which are sequentially arriving in the target domain, where $x_i,x_j \in \mathcal X_T$, $y_i,y_j \in \mathcal Y_T$.
  
  In this paper, we focus on homogeneous online transfer learning, where $\mathcal X_{S_i} = \mathcal X_{T} = R^{m}$, $\mathcal Y_{S_i} = \mathcal Y_T = \{1,2,...,K\}$ for $\forall i=1,2,...,n$. But the marginal distribution and the conditional distribution between domains are different, i.e., $P_{s_i}(x) \neq P_t(x), Q_{s_i}(y|x) \neq Q_t(y|x)$ for $\forall i=1,2,...,n$. We seek to find a new feature representation for each source domain and the target domain by utilizing transformation matrix $A_i \in R^{d \times m}$ for the  $i$-th domain, where $d$ is the dimension of the new feature representation.
  
  For an online transfer learning (OTL) task, our goal is to online learn a prediction function in the target domain from a sequence of labeled examples $D_T^l$. Specifically, during the online stage, at the $t$-th trial of online learning task, the learner receives an instance $x_t$. Firstly we use the current model to predict the label of $x_t$, secondly we will receive its true label,  thirdly we sequentially update the current model in the target domain based on the loss information between the true label and the predicted label. The key challenge is how to find a good representation that can reduce the distribution discrepancy between domains as well as how  to effectively transfer knowledge from source domains to the target domain.

  \subsection{Overall Framework}
  In this paper, we propose a method called \textit{Homogeneous Online Transfer Learning with Online Distribution Discrepancy Minimization} (HomOTL-ODDM), which is composed of an offline stage and an online stage. The structure of the proposed algorithm is shown in Fig. \ref{overall}.
  
   At the offline stage, for each source domain, we use JDA algorithm which utilizes labeled source data and unlabeled target data to get a satisfactory initial transformation matrix, denoted by $A_i$ for the $i$-th source domain. Then, for the $i$-th source domain, the source data are projected to a new feature space by $A_i$.  After that, an existing machine learning method is used to learn a classifier in  new feature space.
   Generally, there are $n$ transformation matrices and $n$ source classifiers in total.
  
   At the online stage, for $t$-trial target instance $x_t$, all the  transformation matrices are used to project $x_t$ to the corresponding feature space, denoted by $x^{p_1}_{t}, x^{p_2}_{t},...,x^{p_n}_{t}$. For the target data in all new feature space, an online learning method, e.g., multi-class PA algorithm \cite{b9} is used to get the  corresponding target classifiers, denoted by $f_{T_1},f_{T_2},...,f_{T_n}$. Totally, there are \textit{n} classifiers in the target domain. We combine source classifiers and target classifiers by allocating a weight to each classifier to build the final classifier $F$.
  
   At each round, we will update the target classifiers $f_{T_1},f_{T_2},...,f_{T_n}$ according to the prediction loss and the weights of all the classifiers by  the Hedge strategy\cite{b30}. Besides, to better reduce the discrepancy between domains as well as improve classification performance, we propose a method to online update transformation matrices. To speed up the algorithm, different from multi-class PA algorithm,  we set a \textit{time window} $T_w$ to control the update frequency of transformation matrices rather than  updating transformation matrices in each round.
  
  \subsection{Multi-class PA algorithm}
  
  In this paper, the source classifiers and target classifiers are all trained with Multi-class PA algorithm\cite{b9}, we will introduce this algorithm detailly. At the $t$-th round, the instance $x_t$ is associated a different weight vector with each of the $K$ labels. That is, the multiclass predictor is now paramterized by $w_t^1,...,w_t^K$, where $w_t^r \in R^d$. The output of the predictor is defined as,
   \begin{equation*}
  f(x_t) =  {((w_t^1 \cdot x_t),...,(w_t^K \cdot x_t)).}
   \end{equation*}
  The margin loss is used in Multi-class PA algorithm. Generally, at the $t$-th round we find the pair of indices $r_t,s_t$ which correspond to the largest violation of the margin constraints,
  \begin{align}
  r_t  = y_t   ,  \quad s_t = \arg\max_{s \neq y_t}  w_t^s \cdot x_t
  \label{rs}
  \end{align}
  where $y_t$ is the true label. Then, the loss in mutli-class case amounts to,
  \begin{equation}
  \begin{split}
  & l(x_t) = l(w_t^1,...,w_t^K;x_t,y_t) = \\
  &\left\{
                                      \begin{array}{ll}
                                      0  &  w^{r_t}_t \cdot x_t - w^{s_t}_t \cdot x_t \ge 1 \\
                                       1 - w^{r_t}_t \cdot x_t  + w^{s_t}_t \cdot x_t &  otherwise
                                      \end{array} \right.
  \end{split}
  \end{equation}
  
  Based on these definitions, the new weight vector ${w_{t+1}^1},...,w_{t+1}^K$ are the solution to the following constrained optimization problem,
  
  \begin{equation}
  \begin{split}
  w_{t+1}^1,...,w_{t+1}^K & = \arg\min_{w^1,...,w^K} \frac{1}{2}\sum_{i=1}^K||w^i-w_t^i||_2^2 \quad \\
  &s.t. \quad l(w^1,...,w^K;(x_t,y_t)) = 0
  \end{split}
  \end{equation}
  
  then we get the resulting update method for multi-class probelm,
  \begin{align}
  w_{t+1}^{r_t} = w^{r_t}_{t} + \tau_tx_t, w_{t+1}^{s_t} = w^{s_t}_{t} - \tau_tx_t  \\
  \tau_t = min \{C, \frac{l(x_t)}{||w_t^{r_t} \cdot x_t- w^{s_t}_t \cdot x_t||_2}\}
  \end{align}
  where C is the \textit{aggressiveness parameter} controlling the influence of the slack term on the objective function.
  \subsection{Offline Stage}
  
   Different from previous online transfer learning methods, we train classifiers in the new feature space instead of the original feature space. JDA algorithm\cite{b4} is a MMD-based method, which jointly adapts both the marginal distribution and conditional distribution in a principled dimensionality reduction procedure, and constructs new feature representation that is effective and
  robust for substantial distribution discrepancy. In our method, JDA algorithm is used to get the initial transformation matrix $A_i$ for the $i$-th source domain with the source data and unlabeled target  data. Then, the original source data is projected to a new feature space,  i.e., $\mathbf{X_{s_i}^{p}= {A_i}X_{s_i}}$. The same transformation process will also be used in the target domain at the online stage.
  
  To perform knowledge transfer, we train a classifier in each source domain using the multi-class PA algorithm(section 3.2) with the data in the new feature space $\mathbf{X^p_{s_i}}$. And then, we get the classifier $f_{S_i}$ for the $i$-th source domain.

   \subsection{Online Stage}
  
  As is shown in Fig. \ref{overall}, at the $t$-th round, after receiving the target instance $x_t$,  all the transformation matrices are used to project $x_t$ to the corresponding feature space, i.e., $\mathbf{x_{t}^{p_i} = {A_i}x_t}$ for the $i$-th transformation matrix. For the target data in each new feature space, an online learning method, e.g., multi-class PA algorithm is used to train the corresponding target classifier, denoted as $f_{T_i}$ for the $i$-th feature space. At the $t$-th round, it is  paramterized by $\boldsymbol {w_{i,t}^1,...,w_{i,t}^K}$. To combine source classifiers and target classifiers, we assign a weight to each classifier. The weight of the $i$-th source classifier is denoted by $u_i$, and the weight of the $i$-th target classifier is denoted by $v_i$. The initialize value of each weight is $\frac{1}{2n}$.

  Then we get the final classifier $F_t$ by Equation \ref{predict_1} and make prediction for $x_t$ by  Equation \ref{predict_2}, where the output of $F_t$ is a k-dimensional vector $\boldsymbol {F_t(x_t^{p_i})} = {(F_t^1,F_t^2,...,F_t^K)}$ and $\widehat{y_t} \in \{1,2...,K\}$ is the predicted label.
  \begin{equation}
  F_t =  \sum_{i=1}^nu_if_{S_i}^t(x^{p_i}_{t}) + v_if_{T_i}^t(x^{p_i}_{t})
  \label{predict_1}
  \end{equation}
  \begin{equation}
  \widehat{y_t} = \arg\max_{k} F_t^k
  \label{predict_2}
  \end{equation}
  After prediction, we will receive the true lable $y_t$, then a loss is computed between $y_t$ and $\widehat{y_t}$ and each target classifier is updated according to the multi-class PA algorithm. Besides, we except the weight of each classifier can be adjusted  dynamically. Inspired by the Hedge strategy \cite{b30}, we suggest the following updating scheme for adjusting the weights:
  \begin{equation}
  u_i =  \frac{u_i\beta^{z^i_t}}{\sum_{i=1}^nu_i\beta^{z^i_t}+\sum_{i=1}^nv_i\beta^{r^i_t}}
  \label{update_1_1}
  \end{equation}
  \begin{equation}
  v_i =  \frac{v_i\beta^{r^i_t}}{\sum_{i=1}^nu_i\beta^{z^i_t}+\sum_{i=1}^nv_i\beta^{r^i_t}}
  \label{update_1_2}
  \end{equation}
  where $\beta \in (0,1)$ is the weight discount parameter, $z^i_t = I[\widehat{y}_{s_i} \neq y_t]$, $\widehat{y}_{s_i}$ is the prediction label of the $i$-th source classifier, similarly, $r^i_t = I[\widehat{y}_{t_i} \neq y_t]$, $\widehat{y}_{t_i}$ is the prediction label of the $i$-th target classifier.

  \begin{algorithm}[tbp]
  \caption{\textbf{HomOTL-ODDM} }
  \label{a1}
  \begin{algorithmic}[1]
  \REQUIRE the classifiers of source domains $f_{S_1},f_{S_2},...,f_{S_n}$, the initial transformation matrices $A_1,A_2,...,A_n$, initial trade-off parameter $C$, $\mu$, and the weight discount $\beta \in (0,1)$.
   $v_i=\frac{1}{2n}$, $u_i=\frac{1}{2n}$, $i= 1,2,...n$. \\
          \quad \quad \quad  $w_{i,0}^j =\mathbf{0}$, $i= 1,2,...n, j = 1,2,...,K$.
  \FOR{$t = 1,2,...,T$}
     \STATE Receive target domain instance ${{x}_{t}}\in {\mathcal X_T}$.
     \STATE Project $x_t$ to corresponding feature space: $x^{p_i}_{t} = A_i^tx_t$
     \STATE Predict $\widehat{y_t}$  by Equations \ref{predict_1} and \ref{predict_2}.
     \STATE Receive true lable: $y_t \in \mathcal Y_t$.
     \FOR{$i=1,2,...,n$}
        \STATE Update the weights $v_i, u_i$ by Equations \ref{update_1_1} and \ref{update_1_2}.
        \STATE Calculate $s_t$ , $r_t$ by equation \ref{rs}
        \STATE  Suffer loss: ${{l}_i}(x^{p_i}_t)={[0,1 - w^{r_t}_{i,t} \cdot x^{p_i}_t  + w^{s_t}_{i,t}  \cdot x^{p_i}_t}]_+$
        \IF{${{l}_i}(x^{p_i}_t)>0$ }
           \STATE  $w_{i,t+1}^{r_t} = w^{r_t}_{i,t} + \tau_t  x^{p_i}_t, w_{i,t+1}^{s_t} = w^{s_t}_{i,t} - \tau_t  x^{p_i}_t$
           \STATE $\tau_t = min \{C, \frac{{{l}_i}(x^{p_i}_t)}{||w_{i,t}^{r_t} \cdot x^{p_i}_t - w^{s_t}_{i,t} \cdot x^{p_i}_t||}\}$
        \ENDIF
     \ENDFOR
     \IF{ t \textit{mod} $T_w$ = 0}
     \STATE Update each transformation matrix $A_i$ by Equation \ref{A_update_final}
     \ENDIF
  \ENDFOR
  \ENSURE $F_t = \sum_{i=1}^nu_if_{S_i}^t(x^{p_i}_{t}) + v_if_{T_i}^t(x^{p_i}_{t})$.
  \end{algorithmic}
  \end{algorithm}
  
  \subsection{The Update of Transformation Matrix}

  At the online stage, besides updating target classifiers and the weights of each classifier, we also update the transformation matrices. On one hand, the transformation matrices are got by target unlabeled data, which don't involve any label information, so utilizing online labeled target data is helpful for improving classification performance. On the other hand, as target labeled data increase, we can use more and more target data so that the measure of distribution discrepancy between domains can become more accurate. In this case, the transformation matrices will be closer to the optimal solution and the distribution discrepancy in new feature space between domains can be smaller.

  \begin{table*}[tbp]
   \caption{Experimental dataset details}
    \centering
    %  \begin{tabular*}{cp{8cm}cp{8cm}cp{8cm}cp{8cm}}
      \begin{tabular}{p{1.7cm}<{\centering}p{1.6cm}<{\centering}p{1.6cm}<{\centering}p{1.6cm}<{\centering}p{1.6cm}<{\centering}p{4.0cm}<{\centering}p{2.8cm}<{\centering}}
  
      \toprule
      Tasks & SD size  & TD size & Dimensions & Class &Source domains(SD) & Target domains(TD)  \\
      \midrule
      PIE1 & 8222  & 3332 & 1024  & 68 & PIE2,PIE3,PIE4,PIE5   & PIE1\\
      PIE2 & 9925  & 1629 & 1024  & 68 & PIE1,PIE3,PIE4,PIE5   & PIE2\\
      PIE3 & 9922  & 1632 & 1024  & 68 & PIE1,PIE2,PIE4,PIE5   & PIE3\\
      PIE4 & 8225  & 3329 & 1024  & 68 & PIE1,PIE2,PIE3,PIE5   & PIE4\\
      PIE5 & 9922  & 1632 & 1024  & 68 & PIE1,PIE2,PIE3,PIE4   & PIE5\\
      Amazon & 1575  & 958  & 800  & 10 & Caltech-256,DSLR,Webcam   & Amazon\\
      Caltech-256 & 1410  & 1123 & 800  & 10 & Amazon,DSLR,Webcam   & Caltech-256\\
      DSLR & 2376  & 157  & 800  & 10 & Amazon,Caltech-256,Webcam   & DSLR\\
      Webcam & 2238  & 295  & 800  & 10 & Amazon,Caltech-256,DSLR   & Webcam\\
      \bottomrule
      \end{tabular}%
  \label{t_dataset}
  \end{table*}%

  We update each transformation matrix $A_i$ of the $i$-th domain by reducing both the marginal distribution discrepancy and conditional distribution discrepancy simultaneously. Empirical MMD  is adopted as the distance measure to compare different distributions.The marginal distribution discrepancy is computed as:
  \begin{equation}
  ||{A_i\bar{c}_{S_i} - A_i\bar{c}_T}||^2_2
  \label{margin}
  \end{equation}
  where  $\bar{c}_{S_i} = \frac{1}{n_{S_i}} \sum_{j=1}^{n_{S_i}} x_j$ is the mean of the $i$-th source domain data, and  $\bar{c}_T = \frac{1}{t}\sum_{j=1}^t{x_j}$ is the mean of the target data till  round \textit{t}. By minimizing Equation \ref{margin}, the marginal distributions between domains are drawn close under the new feature representation.
  
  As $Q(x|y)$ and $Q(y|x$) can be quite involved according to the sufficient statistics when sample sizes are large \cite{b4}, we use the class conditional distribution $Q(x|y)$ to approximate $Q(y|x)$. The distance between conditional distribution $Q_{s}(x|y)$ and $Q_t(x|y)$ is measured as:
  \begin{equation}
  \sum_{k=1}^K||{A_i\bar{c}_{S_i}^{k} - A_i\bar{c}_T^{k}}||^2_2
  \label{condition}
  \end{equation}
  where $\bar{c}_{S_i}^{k} = \frac{1}{n^{(c)}}{\sum_{x_j \in D_{S_i}^{(c)}}} x_j$ represent the mean of examples belonging to class $k$ in the $i$-th source domain, $D_{S_i}^{(c)} = \{x_j:x_j \in D_{S_i} \wedge y(x_j)=k\}$, $n^{(c)} = |D_{S_i}^{(c)}|$. Correspondingly, $\bar{c}_T^{k} = \frac{1}{m^{(c)}}{\sum_{x_j \in D_{T}^{(c)}}} x_j$ represent the mean of the data whose label are $k$ in the target domain, $D_{T}^{(c)} = \{x_j:x_j \in D_{T} \wedge y(x_j)=k\}$, $m^{(c)} = |D_{T}^{(c)}|$.

  We aim to simultaneously minimize the discrepancy of both marginal and conditional distributions while reserving previous information. Thus, we incorporate Equations \ref{margin} and \ref{condition}, which leads to the following optimization problem:
  \begin{equation}
  {A^{t+1}_{i}} = \arg\min_A  ||{A-A^t_i}||_F^2 + \mu \sum_{k=0}^K||{A(\bar{c}_{S_i}^{k} - \bar{c}_T^{k})}||^2_2
  \label{optimiza}
  \end{equation}
  where $||{A(\bar{c}_{S_i} - \bar{c}_T)}||^2_2$  is denoted by $ ||{A(\bar{c}_{S_i}^{0} - \bar{c}_T^{0})}||_2^2$, $\mu$ is the regularization parameter. Solving the above optimization problem, we will get the updating method for $A_i$:
  \begin{equation}
    {A^{t+1}_i} = {A_i^t}(I + \mu \sum_{k=0}^{K}{(\bar{c}_{S_i}^{k} -{\bar{c}_T^{k}})(\bar{c}_{S_i}^{k} -{\bar{c}_T^{k}})^T)^{-1}}
    \label{A_update_final}
  \end{equation}
  when $I + \mu \sum_{k=0}^{K}(\bar{c}_{S_i}^{k} -{\bar{c}_T^{k}})(\bar{c}_{S_i}^{k} -{\bar{c}_T^{k}})^T$ is reversible, we will update $A_i$ using Equation \ref{A_update_final}, otherwise, we will use  its pseudo-inverse  instead.

  As we can see, matrix inversion for updating $A_i$  would cost a lot of time.  To decrease time cost and enhance the robustness of the proposed method,  we set a \textit{time window} $T_w$ for all transformation matrices such as $T_w$= 50 to control the updating frequency. We will update $A_i$ once every  \textit{time window}.

  Combing the above   update rules of classifiers , weights and transformation matrices, we propose homogeneous online transfer learning with online  domain adaptation algorithm (HomOTL-ODDM). The pseudocode of the proposed method is shown in algorithm \ref{a1}.

  \subsection{ Theoretical Analysis}
  
  \begin{table*}[tbp]
   \caption{Mistakes rates(\%) on 9 cross-domain image tasks}
    \centering
    %  \begin{tabular*}{cp{8cm}cp{8cm}cp{8cm}cp{8cm}}
      \begin{tabular}{cp{1.7cm}<{\centering}cp{1.7cm}<{\centering}cp{1.7cm}<{\centering}|cp{2cm}<{\centering}}
  
      \toprule
      Task & PA  & PAIO & HomOTL-I & HomOTL-II & HomOTLMS &  \tabincell{c}{HomOTL-ODDM\\(fixed)} &  \tabincell{c}{HomOTL-ODDM} \\
      \midrule
      PIE1 & $78.45 \pm 0.65$& $40.10 \pm 0.80$ & $63.08\pm0.37$ & $64.46\pm0.34$ & $67.10\pm0.71$ &  $13.92 \pm 0.65$   & $\mathbf{10.39 \pm 1.09}$\\
      PIE2 & $90.99\pm0.62$  & $53.82\pm 0.88$  & $62.15\pm0.31$ & $62.13\pm0.31$ & $69.33\pm0.75$ &  $30.87 \pm 0.55$   & $\mathbf{26.82 \pm 1.86}$\\
      PIE3 & $90.66\pm0.47$  & $52.49\pm0.96$   & $58.93\pm0.36$ & $58.91\pm0.33$ & $69.97\pm1.22$ &  $25.57 \pm 1.47$   & $\mathbf{20.35 \pm 2.87}$\\
      PIE4 & $75.06\pm0.74$  & $31.49\pm0.57$   & $45.22\pm0.22$ & $45.25\pm0.21$ & $60.97\pm0.86$ &  $19.46 \pm 0.54$   & $\mathbf{8.91 \pm 0.95}$\\
      PIE5 & $88.44\pm0.66$  & $57.53\pm1.31$   & $72.04\pm0.55$ & $72.74\pm0.57$ & $74.89\pm1.33$ &  $33.75 \pm 1.21$   & $\mathbf{22.85 \pm 1.78}$\\
      Amazon & $45.66\pm1.26$  & $\mathbf{39.13\pm1.29}$ & $42.89\pm1.78$ & $41.37\pm1.04$ & $39.57\pm0.96$ & $44.23 \pm 1.12$	& $42.05 \pm 1.63$\\
      Caltech-256    & $58.54\pm1.23$  & $51.02\pm1.20$   & $56.88\pm1.18$ & $52.64\pm1.18$ & $50.94\pm1.21$ &  $51.22 \pm 1.36$   & $\mathbf{50.65 \pm 1.15}$\\
      DSLR    & $52.64\pm3.24$  & $29.68\pm1.98$   & $30.27\pm2.78$ & $31.00\pm1.38$ & $24.55\pm2.19$ &  $22.86 \pm 2.09$   & $\mathbf{23.09 \pm 3.27}$\\
      Webcam    & $43.54\pm1.71$  & $34.83\pm1.65$   & $34.90\pm2.43$ & $36.36\pm1.92$ & $30.22\pm1.85$ &  $27.86 \pm 2.36$   & $\mathbf{26.17 \pm 2.89}$\\
      \bottomrule
      Average    & $69.33\pm1.17$  & $43.34\pm1.18$ & $51.81\pm1.11$ & $51.65\pm0.80$ & $54.17\pm1.23$ & $29.97\pm1.26$ & $\mathbf{25.69 \pm 1.94}$\\
      \bottomrule
      \end{tabular}%
  \label{t_result}
  \end{table*}%
  
  Similar to \cite{b7}, we provide the mistake bound of the algorithm HomOTL-ODDM as follows.
  
  \newtheorem{thm}{\bf Theorem}
  \begin{thm} \label{thm1}
  Let us denote $M$ as the number of mistakes made by the algorithm. By choosing $\beta = \sqrt{M_{min}}/(\sqrt{M_{min}} + \sqrt{\ln2n})$, we have M bounded by
  \begin{equation}
  M \leq M_{min} + \frac{3}{2}\sqrt{\ln{2n}M_{min}} + \ln{2n}
  \label{M_res}
  \end{equation}
   \begin{equation*}where \quad
  M_{min} = min\{M_s^i,M_t^i\}, \quad
  M_s^i = \sum_{t=1}^T z^i_t, \quad
  M_t^i = \sum_{t=1}^T r^i_t \quad
  \end{equation*}
  \end{thm}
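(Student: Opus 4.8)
The plan is to treat the $2n$ source and target classifiers as experts in a Hedge / weighted-majority ensemble and to bound $M$ against the single best expert, whose cumulative error is $M_{min}$. Because the weights $u_i,v_i$ are renormalized every round — they start at $\frac{1}{2n}$ each, so the $2n$ of them sum to $1$ throughout — the natural potential to track is the product of the per-round normalization constants $Z_t = \sum_{i=1}^n u_i^t \beta^{z_t^i} + \sum_{i=1}^n v_i^t \beta^{r_t^i}$. I would sandwich $\Pi = \prod_{t=1}^T Z_t$ between a lower bound coming from the best expert and an upper bound coming from the algorithm's own mistakes.

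For the lower bound, unrolling the updates (\ref{update_1_1})--(\ref{update_1_2}) from $w_j^1=\frac{1}{2n}$ gives, for every expert $j$, a final weight $\frac{1}{2n}\beta^{m_j}/\Pi$, where $m_j$ is that expert's total number of mistakes. Since each normalized weight is at most $1$, taking $j$ to be the best expert ($m_j=M_{min}$) yields $\Pi \ge \beta^{M_{min}}/(2n)$. For the upper bound I would use that the indicators are $\{0,1\}$-valued, so $\beta^{z}=1-(1-\beta)z$ exactly, whence $Z_t = 1-(1-\beta)F_t$ with $F_t = \sum_i (u_i^t z_t^i + v_i^t r_t^i)$ the weight the ensemble places on erring classifiers. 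On rounds where the algorithm itself errs one has $F_t \ge \tfrac12$, so $Z_t \le \frac{1+\beta}{2}$, while on all other rounds $Z_t \le 1$; hence $\Pi \le \big(\frac{1+\beta}{2}\big)^{M}$.

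Combining the two bounds and taking logarithms produces the classical weighted-majority inequality $M \le \big(\ln(2n) + M_{min}\ln(1/\beta)\big)/\ln\frac{2}{1+\beta}$. The closing step is to substitute $\beta = \sqrt{M_{min}}/(\sqrt{M_{min}}+\sqrt{\ln 2n})$ and bound the two logarithms by standard estimates (for instance $\ln\frac{2}{1+\beta}\ge \frac{1-\beta}{2}$ from below and a matching bound on $\ln(1/\beta)$ from above), after which routine simplification collapses the right-hand side to $M_{min} + \frac{3}{2}\sqrt{\ln(2n)\,M_{min}} + \ln(2n)$.

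I expect the main obstacle to be justifying $F_t \ge \tfrac12$ on the rounds where the algorithm errs. Under a pure weighted-vote reading this is immediate: the weight the ensemble assigns to the true label $y_t$ cannot exceed the weight it assigns to the winning label $\widehat{y_t}$, and these two weights sum to at most $1$, so the true label carries weight at most $\tfrac12$ and the erring classifiers carry $F_t \ge \tfrac12$. The delicacy is that (\ref{predict_1})--(\ref{predict_2}) combine the real-valued PA score vectors $f_{S_i}^t, f_{T_i}^t$ rather than one-hot votes, whereas the indicators $z_t^i, r_t^i$ record only each classifier's own $\arg\max$; reconciling these two viewpoints — by adopting the voting interpretation used in \cite{b7}, or by passing to a margin-based surrogate consistent with the scores — is the genuinely delicate point, while the remaining Hedge bookkeeping and the optimization over $\beta$ are routine.
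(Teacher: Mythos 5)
Your overall strategy---treating the $2n$ classifiers as Hedge experts and bounding $M$ against the best one---matches the paper's intent, but your execution takes a self-contained weighted-majority potential argument, whereas the paper simply invokes Theorem~2 of the Hedge paper \cite{b30} to get
\begin{equation*}
M \leq \frac{M_{min}\ln(1/\beta)+\ln 2n}{1-\beta},
\end{equation*}
then applies $\ln(1/\beta)\le (1-\beta^2)/2\beta$ and substitutes the stated $\beta$. The trouble is that these two routes are \emph{not} equivalent, and yours cannot produce the claimed constants. Your potential argument yields $M \le \bigl(M_{min}\ln(1/\beta)+\ln 2n\bigr)/\ln\frac{2}{1+\beta}$, with denominator $\ln\frac{2}{1+\beta}\approx\frac{1-\beta}{2}$ rather than $1-\beta$. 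Carrying your own estimates through ($\ln\frac{2}{1+\beta}\ge\frac{1-\beta}{2}$ below, $\ln(1/\beta)\le\frac{1-\beta^2}{2\beta}$ above) gives
\begin{equation*}
M \le \frac{M_{min}(1+\beta)}{\beta} + \frac{2\ln 2n}{1-\beta},
\end{equation*}
which, after substituting $\beta=\sqrt{M_{min}}/(\sqrt{M_{min}}+\sqrt{\ln 2n})$, collapses to $2M_{min}+3\sqrt{M_{min}\ln 2n}+2\ln 2n$ --- exactly twice the bound in the theorem, not the bound itself. Your closing claim that ``routine simplification'' recovers $M_{min}+\frac{3}{2}\sqrt{\ln(2n)M_{min}}+\ln 2n$ is therefore false; the factor of $2$ does not cancel.

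Moreover, this factor of $2$ is not an artifact you can polish away: it is intrinsic to the deterministic majority-vote reading you adopt to justify $F_t\ge\tfrac12$ on error rounds. Against an adversary, no deterministic vote over experts can guarantee mistakes close to $M_{min}$ with leading constant $1$; the constant-$1$ bound of Hedge Theorem~2 holds for the \emph{mixture} (expected) loss $\sum_t \mathbf{p}_t\cdot\boldsymbol{\ell}_t$ of the randomized/weighted-average predictor, which is precisely what the paper's cited inequality bounds. So the delicacy you flagged --- reconciling the real-valued score combination in Equations~\ref{predict_1}--\ref{predict_2} with per-expert $0$--$1$ indicators --- and the factor-$2$ loss are the same problem seen from two sides: the moment you force the argument through ``algorithm errs $\Rightarrow$ erring experts carry weight $\ge\tfrac12$,'' you are proving the weighted-majority bound, not the Hedge bound, and the theorem as stated is out of reach. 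To match the paper you would instead identify $M$ with the quantity the Hedge theorem actually controls (as the paper implicitly does, with whatever looseness that entails) and proceed from the $1-\beta$ denominator; the remaining algebra in your last step is then exactly the paper's and is correct.
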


  \begin{proof}
  From the Theorem 2 in Hedge Algorithm \cite{b30}, if we set all the initial weights equally to be $1/2n$ for each source and target classifier, we can obtain the following mistake bound for $M$,
  \begin{equation}
  M \leq \frac{M_{min}\ln(1/\beta)+\ln2n}{1-\beta}
  \label{M_bound}
  \end{equation}
  It can be shown that $\ln(1/\beta) \leq (1-\beta^2)/2\beta$ for $\beta \in (0,1]$. If we apply this equation to \ref{M_bound}, then we can obtain
  \begin{equation}
  \frac{M_{min}\ln(1/\beta)+\ln2n}{1-\beta} \leq \frac{M_{min}(1+\beta)}{2\beta} + \frac{\ln2n}{1-\beta}
  \label{M_temp}
  \end{equation}
  By substituting $\beta = \sqrt{M_{min}}/(\sqrt{M_{min}} + \sqrt{\ln2n})$ into \ref{M_temp}, we obtain \ref{M_res}.
  \end{proof}
  
  Theorem 1 provides an upper bound of the mistake for our algorithm. The result in theorem 1 implies that the learner's average mistake per round  can never be much larger than that of the best pure strategy. Specially, in HomOTL-ODDM, the classifiers are trained in new feature space, where the distribution discrepancy between domains is reduced, so it is expected to get fewer mistakes in each domain than in original feature space.

  \section{Experiment}

  In this paper, nine image classification tasks are established on two widely adopted benchmark datasets \textbf{PIE} and \textbf{Office+Caltech}(refer to Table \ref{t_dataset})  to evaluate the proposed method. Codes will be available at  \url{https://github.com/yaoyueduzhen/HomOTL-ODDM}.
  
  \subsection{Datasets}

  \textbf{PIE} has 68 individuals with 41638 faces of size 32*32. In this paper, \textbf{PIE1}(C05, left pose), \textbf{PIE2}(C07, upward pose), \textbf{PIE3}(C09, downward pose), \textbf{PIE4}(C27, frontal pose) and  \textbf{PIE5}(C29, right pose) are chosen. \textbf{Office} dataset is composed by three real-world object domains: Amazon, Webcam and DSLR. It has 4652 images with 31 object categories. \textbf{Caltech-256} is a standard database for object recognition, which has 30607 images and 256 categories.
  
  In the paper, we adopt public \textbf{Office+Caltech} datasets\cite{b28}. SURF features are extracted and quantized into an 800-bin histogram with codebooks computed with K-means on a subset of images from Amazon. Then the histograms are standardized by z-score. Specially, we will have four domains: \textbf{Caltech-256, Amazon, DSLR} and \textbf{Webcam}. We choose one domain as the target domain, the remain domains are used as source domains. In this way, we can generate four learning tasks in dataset PIE and five learning tasks in dataset Office+Caltech.
  
  \subsection{Baseline Methods}

  To evaluate the performance of the proposed algorithm, we compare our algorithm with several state-of-the-art methods,  and for fair comparison, all of these methods adopt the mutil-class PA algorithm as classifiers.
  
  \begin{itemize}
  \item \textbf{PA \cite{b9}:}  A classical online learning method without exploiting any knowledge from source domain.
  \item \textbf{PAIO \cite{b7}:}  A variant of PA algorithm  by initializing PA with a classifier trained with the whole source domain.
  \item \textbf{HomOTL-I \cite{b5}:} A single-source online transfer learning algorithm, which adopts a loss-based weight update strategy.
  \item \textbf{HomTOL-II \cite{b5}:} A single-source online transfer learning algorithm, which adopts a mistake-driven weight update strategy.
  \item \textbf{HomOTLMS \cite{b7}:} A recently proposed mutilple-source online transfer learning method that combines source classifiers and target classifier adaptively.
  \item \textbf{HomOTL-ODDM(fixed):} A variant of our proposed method, i.e., we just use the initial transformation matrices without updating at the online stage
  \end{itemize}
  
  \subsection{Implementation Details}
  
  We adopt the mutil-class PA algorithm to run on the source dataset and adopt the average classifier as the source classifier in each domain, which generally enjoys better generalization ability \cite{b13}. Futher, we draw 20 times of random permutation of the instances in the target domain in order to obtain stable results by averaging over the 20 trials. Considering that HomOTL-I and HomOTL-II are single-source online transfer learning methods, we combine all the instances in different source domains as a single source domain for HomOTL-I and HomOTL-II, with hyper-parameter $C$ = 5 and $\beta = \sqrt{T}/(\sqrt{T} + \sqrt{ln2})$.
  
  In our method, we set hyper-parameter in JDA algorithm as $m' = 100, T = 10$ and 1) $\lambda = 0.1$ for the \textbf{PIE} datasets, 2) $\lambda = 1$ for the \textbf{Office+Caltech} datasets. Besides, we also set $C$ = 5, $\beta = \sqrt{T}/(\sqrt{T} + \sqrt{ln2})$, $\mu = 1$ and 1) $T_w$ = 50 for \textbf{PIE} datasets, 2) $T_w$ = 10 for \textbf{Office+Caltech} datasets. For performance evaluation, from each of these nine tasks, we randomly sample  30\% from the target domain to form the unlabeled subset $X_T^u$, and the remaining 70\% in the target domain to form the online labeled subset $X_T^l$.
  
  In this paper, we evaluate the predictive accuracy of online learning methods by measuring the standard mistake rate on the labeled subset, which is defined as,
  \begin{equation}
  mistake \quad rate = \frac{|x:x \in {D_T^l} \bigwedge \widehat{y_t} \neq y_t|}{|x:x \in D_T^l|}
  \end{equation}

  \subsection{Experiment Results}
   The classification mistake rates of HomOTL-ODDM and the five baseline methods on the nine cross-domain image tasks are illustrated in Table \ref{t_result}. Several observation can be drawn from the experimental results. First of all, PA algorithm performs the worst in most of the tasks, which indicates that transferring knowledge from source domains can improve the target performance. Second, the average classification mistake rate of HomOTL-ODDM is \textbf{25.69\%} compared to the best baseline method PAIO, i.e., a significant error reduction of \textbf{17.65\%}.  This verifies that HomOTL-ODDM can construct more effective representation for cross-domain classification tasks. Third, HomOTL-ODDM outperforms HomOTLMS. A major limitation of HomOTLMS is that the distribution discrepancy between domains is not explicitly reduced. HomOTL-ODDM avoids this limitation and achieves much better results. Last but not least, HomOTL-ODDM achieves much better performance than HomOTL-I and HomOTL-II, which are two single-source online transfer learning methods, because HomOTL-ODDM not only reduce distribution discrepancy between domains but also transfer knowledge from multiple source domains.

   Fig. \ref{fig_mistakes} also shows the details of average mistake rates varying over the learning processes. We only report results on task PIE1 and Amazon, while similar trends on all other datasets are not shown due to space limitation.  The observations show that the proposed algorithm achieve the best performance compared with baseline methods.

  \begin{figure}[htbp]
  \centering
  \subfigure[PIE1]{
  \includegraphics[width=0.46\linewidth]{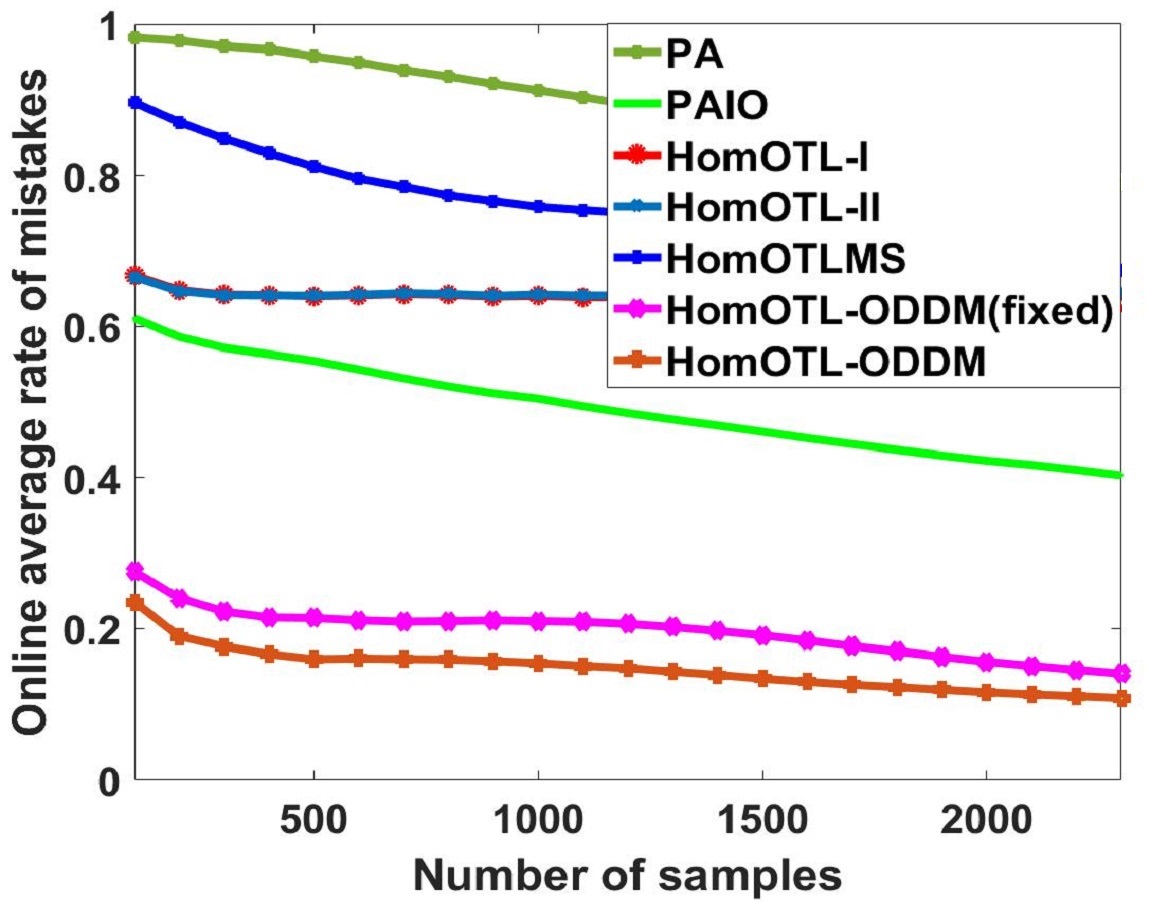}
  }
  \subfigure[Amazon]{
  \label{fig:subfig:a}
  \includegraphics[width=0.46\linewidth]{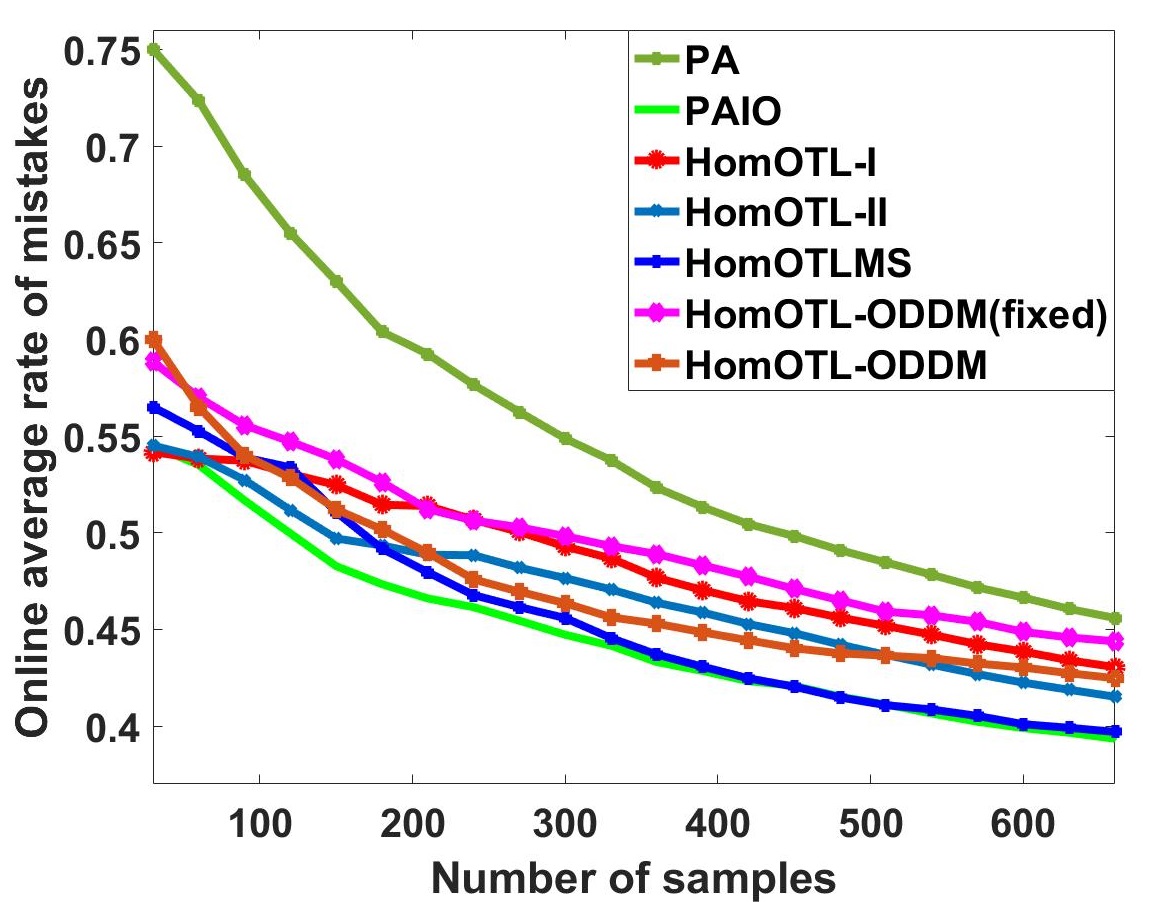}
  }
  \caption{Evaluation of online mistake rates varying over learning process}
  \label{fig_mistakes}
  \end{figure}

  \subsection{Effectiveness Verification}
  
  \textbf{Effectiveness of Updating transformation Matrix:} 
  Table \ref{t_result} shows the result of HomOTL-ODDM(fixed) compared with HomOTL-ODDM. As we can see, 1) both methods perform better than all the baselines, which indicates that both methods can improve transfer performance by reducing distribution discrepancy between domains. 2) HomOTL-ODDM gets better results compared with HomOTL-ODDM(fixed), and the average mistakes rates is reduced by 4.28\%, which reveals that updating transformation matrices using online labeled target data is effective to improving transfer performance.

  \textbf{Distribution Discrepancy:} We evaluate the distribution discrepancy varying over the learning processes in task PIE1(refer to Fig. \ref{fig_MMD_1}) and Amazon(refer to Fig. \ref{fig_MMD_2}). We compute the aggregate MMD distance of HomOTL-ODDM on their induced embeddings by Equation \ref{optimiza}. Note that, in order to compute the true distance in both the marginal and conditional distributions between domains, we have to use the groundtruth labels for target  data. However, the groundtruth labels are only used for verification, not for learning procedure.

   Above all, we have three observations.
   1) As target data increases, updating transformation matrices can help extract a more effective and robust representation, which can reduce the distribution discrepancy between domains as well as improve the transfer performance. 2) When there is small discrepancy between domains, e.g., task Amazon,  HomOTL-ODDM can achieve the optimal distance after receiving a small number of examples. 3) Considering previous classification results, we find that when there is larger discrepancy between domains, HomOTL-ODDM can achieve better transfer performance improvement.

  \begin{figure}
    \centering
    \subfigure[PIE]{
    \label{fig_MMD_1}
    \includegraphics[width=0.44\linewidth]{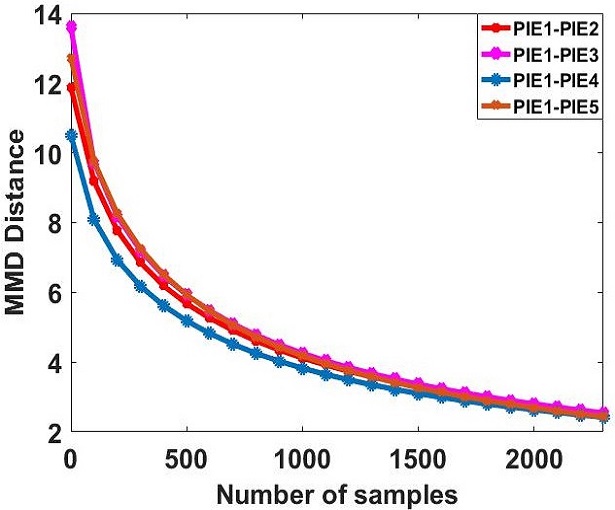}
    }
    \subfigure[Amazon]{
    % Requires \usepackage{graphicx}
    \label{fig_MMD_2}
    \includegraphics[width=0.44\linewidth]{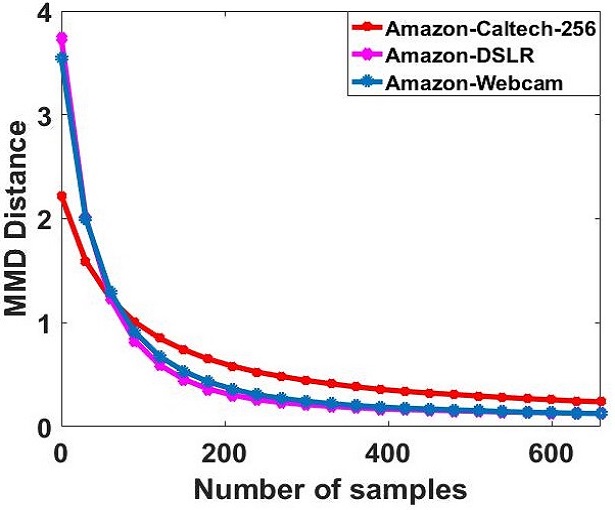}
    }
    \caption{Distribution discrepancy varying over the learning processes}
    \label{fig_MMD}
  \end{figure}

  \subsection{Parameter Sensitivity}
  
  We conduct sensitivity analysis to validate that HomOTL-ODDM can achieve optimal performance under a wide range of parameter values.
  
  We run HomOTL-ODDM with varying values of $C$. We only report results on task PIE1 and Webcam, while similar trends on all other datasets are not shown due to space limitation.
   Fig. \ref{fig_para_c} evaluates the online prediction performance of the compared algorithms with varied $C$ values across all the tasks. Several observations can be drawn from the results. First of all, it is clear that the proposed method is significantly more effective in most cases. Second, among all the compared algorithms, we observe that HomOTL-ODDM always achieve the best performance when $C$ is sufficiently large($C >2$), which indicates that a large $C$ can improve the transfer efficiency. In the experiments, we set $C$ to be 5 for all the algorithms.

  \begin{figure}
  \centering
  \subfigure[PIE1]{\label{fig:subfig:b}
  \includegraphics[width = 0.46\linewidth]{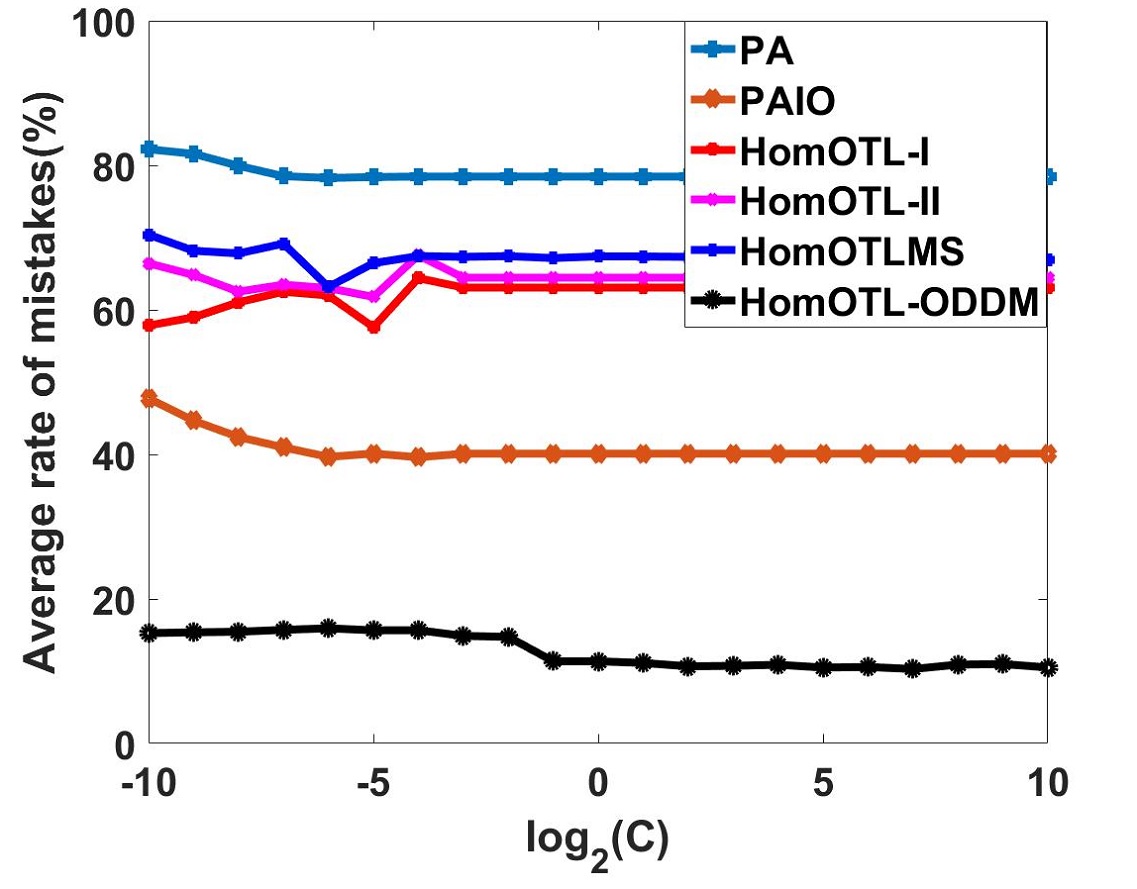}
  }
  \subfigure[Webcam]{
  \label{fig:subfig:a}
  \includegraphics[width = 0.46\linewidth]{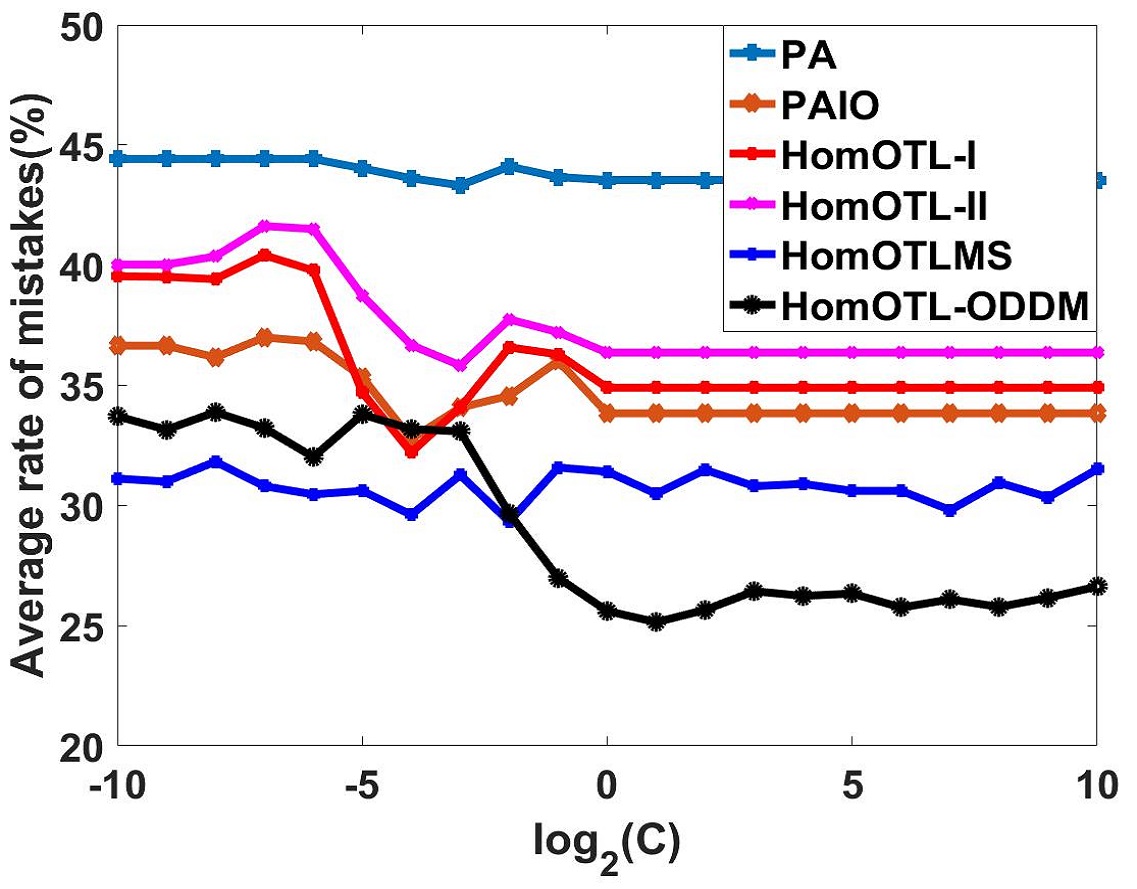}
  }
  \caption{Parameter sensitivity of $C$}
  \label{fig_para_c}
  \end{figure}

  We run HomOTL-ODDM with varying values of $T_w$. As we can see, the smaller $T_w$ is, the more frequently transformation matrices will be updated, which will cost a lot of time as well as perform a poor performance. When $T_w$ becomes large, the update frequency will decrease, it is possible that it is not effective to reduce the distribution discrepancy between domains. We plot average mistakes w.r.t. different values of $T_w$ in Fig. \ref{fig_para_tw}, which indicates that $T_w \in [30, 200]$ can be optimal parameter values.

  \begin{figure}[h]
  \centering
  \subfigure[PIE datasets]{
  \label{fig:subfig:a}
  \includegraphics[width=0.46\linewidth]{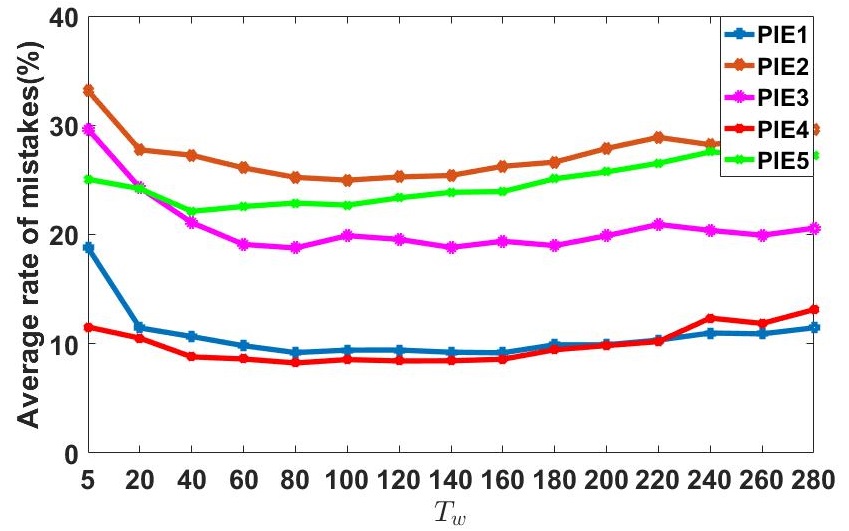}
  }
  \subfigure[Office+Caltech datasets]{\label{fig:subfig:b}
  \includegraphics[width=0.46\linewidth]{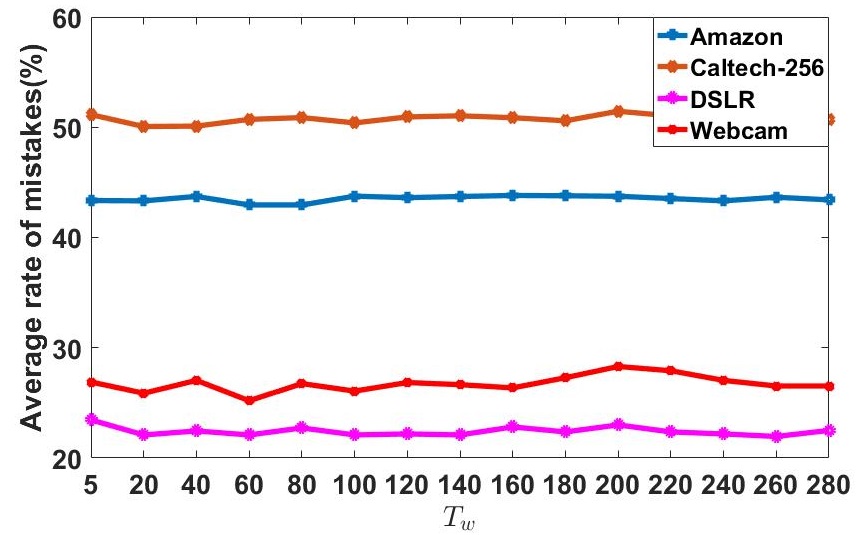}
  }
  \caption{Parameter sensitivity of $T_w$}
  \label{fig_para_tw}
  \end{figure}

  \section{Conclusion}
  
  In this paper, we propose a method called HomOTL-ODDM. We focus on online transfer learning  with multiple source domains and use the Hedge strategy to leverage knowledge from source domains. We seek to find a new feature representation by projecting the original source and target data onto a new space, where the distribution discrepancy between domains can be online reduced simultaneously.  Furthermore, We provide the mistake bound of the proposed method and conduct extensive experiments. The experimental results show that HomOTL-ODDM significantly outperforms several state-of-the-art methods in a variety of cross-domain problems.
  
\bibliography{ecai}

\begin{thebibliography}{10}

\bibitem{b17}
Karsten~M. Borgwardt, Arthur Gretton, Malte~J. Rasch, Hans-Peter Kriegel,
  Bernhard Schölkopf, and Alex~J. Smola, `{Integrating structured biological
  data by Kernel Maximum Mean Discrepancy}', {\em Bioinformatics}, {\bf
  22}(14),  e49--e57, (07 2006).

\bibitem{b13}
N.~{Cesa-Bianchi}, A.~{Conconi}, and C.~{Gentile}, `On the generalization
  ability of on-line learning algorithms', {\em IEEE Transactions on
  Information Theory}, {\bf 50}(9),  2050--2057, (Sep. 2004).

\bibitem{b9}
Koby Crammer, Ofer Dekel, Joseph Keshet, Shai Shalev-Shwartz, and Yoram Singer,
  `Online passive-aggressive algorithms', {\em Journal of Machine Learning
  Research}, {\bf 7}(3),  551--585, (2006).

\bibitem{b30}
Yoav Freund and Robert~E Schapire, `A decision-theoretic generalization of
  on-line learning and an application to boosting', {\em Journal of Computer
  and System Sciences}, {\bf 55}(1),  119 -- 139, (1997).

\bibitem{b6}
Liang Ge, Jing Gao, and Aidong Zhang, `Oms-tl: a framework of online multiple
  source transfer learning', in {\em ACM International Conference on
  Information \& Knowledge Management}, pp. 2423--2428, (2013).

\bibitem{b28}
B.~{Gong}, Y.~{Shi}, F.~{Sha}, and K.~{Grauman}, `Geodesic flow kernel for
  unsupervised domain adaptation', in {\em 2012 IEEE Conference on Computer
  Vision and Pattern Recognition}, pp. 2066--2073, (June 2012).

\bibitem{b14}
Steven C.~H. Hoi, Doyen Sahoo, Lu~Jing, and Peilin Zhao, `Online learning: A
  comprehensive survey', (2018).

\bibitem{b19}
M.~{Long}, J.~{Wang}, G.~{Ding}, S.~J. {Pan}, and P.~S. {Yu}, `Adaptation
  regularization: A general framework for transfer learning', {\em IEEE
  Transactions on Knowledge and Data Engineering}, {\bf 26}(5),  1076--1089,
  (May 2014).

\bibitem{b4}
Mingsheng Long, Jianmin Wang, Guiguang Ding, Jiaguang Sun, and Philip~S. Yu,
  `Transfer feature learning with joint distribution adaptation', in {\em IEEE
  International Conference on Computer Vision}, pp. 2200--2207, (2014).

\bibitem{b3}
Sinno~Jialin Pan, Ivor~W Tsang, James~T Kwok, and Qiang Yang, `Domain
  adaptation via transfer component analysis', {\em IEEE Transactions on Neural
  Networks}, {\bf 22}(2),  199, (2011).

\bibitem{b1}
Sinno~Jialin Pan and Qiang Yang, `A survey on transfer learning', {\em IEEE
  Transactions on Knowledge \& Data Engineering}, {\bf 22}(10),  1345--1359,
  (2010).

\bibitem{b25}
F.~Rosenblatt, `The perceptron: A probabilistic model for information storage
  and organization in the brain', {\em Psychological Review},  65--386, (1958).

\bibitem{b23}
Sandeepkumar Satpal and Sunita Sarawagi, `Domain adaptation of conditional
  probability models via feature subsetting', in {\em Knowledge Discovery in
  Databases: PKDD 2007}, pp. 224--235. Springer Berlin Heidelberg, (2007).

\bibitem{b26}
Shai Shalev-shwartz, Koby Crammer, Ofer Dekel, and Yoram Singer, `Online
  passive-aggressive algorithms', in {\em Advances in Neural Information
  Processing Systems 16}, eds., S.~Thrun, L.~K. Saul, and B.~Sch\"{o}lkopf,
  1229--1236, MIT Press, (2004).

\bibitem{b4_add}
Baochen Sun, Jiashi Feng, and Kate Saenko, `Return of frustratingly easy domain
  adaptation', in {\em Proceedings of the Thirtieth {AAAI} Conference on
  Artificial Intelligence, February 12-17, 2016, Phoenix, Arizona, {USA.}}, pp.
  2058--2065, (2016).

\bibitem{b2}
Jindong Wang, Yiqiang Chen, Shuji Hao, Wenjie Feng, and Zhiqi Shen, `Balanced
  distribution adaptation for transfer learning', in {\em IEEE International
  Conference on Data Mining}, (2017).

\bibitem{b33}
Melinda~Han Williams, Melinda~Han Williams, Melinda~Han Williams, Melinda~Han
  Williams, Melinda~Han Williams, and Foster Provost, `Scalable hands-free
  transfer learning for online advertising', in {\em ACM SIGKDD International
  Conference on Knowledge Discovery and Data Mining}, pp. 1573--1582, (2014).

\bibitem{b7}
Qingyao Wu, Hanrui Wu, Xiaoming Zhou, Mingkui Tan, Yonghui Xu, Yuguang Yan, and
  Tianyong Hao, `Online transfer learning with multiple homogeneous or
  heterogeneous sources', {\em IEEE Transactions on Knowledge \& Data
  Engineering}, {\bf PP}(99),  1--1, (2017).

\bibitem{b27}
Peilin Zhao, Steven C.~H. Hoi, and Rong Jin, `Double updating online learning',
  {\em J. Mach. Learn. Res.}, {\bf 12},  1587--1615, (July 2011).

\bibitem{b5}
Peilin Zhao, Steven C.~H. Hoi, Jialei Wang, and Bin Li, `Online transfer
  learning', {\em Artificial Intelligence}, {\bf 216}(16),  76--102, (2014).

\end{thebibliography}
\end{document}